\documentclass[envcountsame]{llncs}

\usepackage{xcolor}
\usepackage{hyperref}
\usepackage{xspace}
\usepackage{extarrows}

\usepackage{amsfonts}
\usepackage[T1]{fontenc}
\usepackage{tikz}
\usetikzlibrary{arrows,arrows.meta,automata,positioning} 

\newcommand{\lrot}[1]{\ell(#1)}
\newcommand{\sumpres}{source-bounded\xspace} 
\newcommand{\Wlog}{W.l.o.g.\xspace}
\newcommand{\mywlog}{w.l.o.g.\xspace}

\newcommand{\symg}[1]{S_{#1}}

\newcommand{\vect}[1]{\mathbf{#1}}
\newcommand{\vr}[1]{\vect{#1}}

\newcommand{\vas}{\text{\sc vas}\xspace}
\newcommand{\vass}{\text{\sc vass}\xspace}

\newcommand{\parvass}[1]{$#1$-\vass}
\newcommand{\parvas}[1]{$#1$-\vas}

\newcommand{\norm}[1]{\lVert#1\rVert}

\newcommand{\NP}{\text{\sc NP}\xspace}

\newcommand{\pspace}{\text{\sc PSpa\-ce}\xspace}

\newcommand{\trans}[1]{\stackrel{#1}{\longrightarrow}}

\newcommand{\size}[1]{|#1|}

\newcommand{\Z}{\mathbb{Z}}

\newcommand{\N}{\mathbb{N}}

\newcommand{\V}{\mathcal{V}}

\newcommand{\set}[1]{\left\{#1\right\}}

\newcommand{\setof}[2]{\left\{#1 \, \middle\vert \, #2\right\}}

\begin{document}

\title{Reachability in 3-VAS\thanks{Partially supported by the NCN grant 2024/55/B/ST6/01674.}}
\author{{\L}ukasz Kami{\'n}ski\orcidID{0009-0004-1641-9049} \\ \and 
S{\l}awomir Lasota\orcidID{0000-0001-8674-4470}}

\institute{University of Warsaw, Poland \\
\email{\{l.kaminski5, s.lasota\}@uw.edu.pl}
}

\authorrunning{{\L}. Kami{\'n}ski \and S. Lasota}

\maketitle

\begin{abstract}
We settle the exact complexity of the reachability problem in 
(stateless) vector addition systems (VAS) in fixed low dimension.
In dimensions 2--4 it has only been known to be sandwiched between NP and PSPACE. 
We prove PSPACE-hardness of the reachability problem for symmetric 
vector addition systems in dimension 3 (3-VAS), a restricted fragment of
general 3-VAS.
Combined with previously established PSPACE upper bounds, our result settles 
the complexity of the problem to be
PSPACE-complete in 3-VAS and 4-VAS, as well as in their symmetric 
fragments.
\end{abstract}

\keywords{VAS \and VASS \and 3-VAS \and symmetric VAS \and fixed-dimension VAS and VASS
\and reachability 
}

\section{Introduction}

Petri nets, or equivalently \emph{vector addition systems} (\vas), or equivalently
a stateful variant thereof, vector addition systems with states (\vass),
are a well-estab\-lished model of concurrency with numerous applications 
\cite{HopcroftPansiot}.
The central algorithmic question for this model is the \emph{reachability problem},
which asks whether a given target configuration can be reached from a given initial configuration by a sequence of valid execution steps.
The decidability of this problem was established by Mayr in 1981 \cite{Mayr81},
and was subsequently refined by Kosaraju \cite{Kosaraju82} and Lambert \cite{Lambert92}.
An exponential-space lower bound had already been shown by Lipton in 1976 \cite{Lipton76}.
For nearly four decades, these remained the only known complexity bounds,
making reachability one of the most prominent open problems in the verification of concurrent systems.
Only in the last few years has substantial progress been made, culminating in an Ackermannian upper bound \cite{LS19},
a breakthrough non-elementary lower bound \cite{CLLLM19,jacm}, and finally matching Ackermannian lower bounds obtained independently in \cite{CO21} and \cite{L21}.

Consequently, an active line of research concerns the complexity of reachability in fixed dimension.
Under binary encoding, reachability for \vass is \NP-complete in dimension~1 and 
\pspace-complete in dimension~2 \cite{Blondin15}.
For dimension~3, we have now \textsc{2-ExpSpace} upper bound \cite{3vass} but no lower bound
except the one inherited from dimension 2.
The complexity is even less understood in the stateless setting.
Under binary encoding again, reachability in dimension 1 is \NP-complete, and
\pspace-completeness of \vass in dimension 2 \cite{Blondin15}
implies \pspace-hardness of \vas in dimension 5, 
via the classical simulation of states using 3 additional dimensions \cite{HopcroftPansiot}.
\pspace upper bound has been recently established for \vas in dimension 4 \cite{chen2026}.
The exact complexity of reachability for \vas in dimensions~2, 3, and~4 therefore remains open, 
sandwiched between \NP\ and \pspace.

One possible approach to understanding the complexity of reachability is to study restricted classes of vector addition systems.
A particularly natural restriction is \emph{symmetric} \vass, where the set of transitions is closed 
under permutations of coordinates%
\footnote{
Motivation to study symmetric \vass comes from \emph{data \vass} \cite{Lasota16,LNORW07,RF11}, an
extension of plain \vass with data, where decidability of the reachability problem remains open.
}.
Quite surprisingly,
the reachability problem in 
symmetric \vass is \pspace-complete regardless of dimension $d \geq 2$ \cite{KL25},
and thus exhibits a dramatic complexity drop compared with general \vass.
\pspace upper bound for symmetric \vas of arbitrary dimension follows immediately,
but no nontrivial lower bound was known for this model.

\paragraph{Contribution}

Our main contribution is a proof of \pspace-hardness for the reachability problem
in \vas of dimension 3 (\parvas 3), even
in the restricted fragment of symmetric \vas.
\begin{theorem} \label{thm:main}
  The reachability problem for symmetric \parvas 3 is \pspace-hard.
\end{theorem}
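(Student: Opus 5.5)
We reduce from reachability in 2-dimensional \vass with binary-encoded updates, which is \pspace-complete \cite{Blondin15}. More precisely, we would start from a convenient \pspace-hard normal form. Either a bounded one-counter-like automaton with polynomially many states and counters up to exponential size, or directly a 2-\vass in which the control state and both counters stay within explicit bounds along accepting runs. The key difficulty is that a stateless symmetric \parvas 3 has no control states, and each transition vector must come with all its coordinate permutations. So both the control structure and the two counters must be encoded in the geometry of three symmetric coordinates.

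Our plan is to encode a configuration $(q,x,y)$ of the source system as a vector in $\N^3$ whose coordinates are large numbers with a carefully chosen ``shape''. We would pick a huge base $B$, exponential but with a polynomial-size binary encoding, and a family of residues or offsets $c_q$, one per state $q$. A configuration is then represented by a vector such as $(B\cdot \alpha_q + x,\; B\cdot\beta_q + y,\; \text{remainder})$. We keep the sum of all coordinates (or some other symmetric invariant, like a weighted sum that is permutation-invariant only on the relevant orbit) fixed, so that runs are \sumpres. Each source transition $q\xrightarrow{(a,b)}q'$ becomes the vector
\[
(B(\alpha_{q'}-\alpha_q)+a,\; B(\beta_{q'}-\beta_q)+b,\; -(\ldots)).
\]
Symmetry then adds all permuted copies. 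The core of the proof is a \emph{soundness} lemma: starting from the chosen initial vector, any run reaching the chosen target can be reordered into, or already is, a run that uses only ``intended'' permutations in the right order. We would prove this by an invariant. The sorted coordinate pattern (largest, middle, smallest) together with the residues modulo $B$ determines the current state uniquely. A permuted or out-of-sequence transition either drives some coordinate negative, since the offsets $\alpha_q,\beta_q$ are chosen so that any wrong subtraction of a multiple of $B$ overdraws, or produces a pattern from which the target is unreachable, detected by a final residue or sum check.

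Completeness is the easy direction: a source run maps step by step to intended transitions, with counter bounds guaranteeing that the low-order parts never overflow into the $B$-blocks. The main obstacle is soundness in the presence of symmetry. With only three coordinates, the image of the state under permutations is highly constrained, and a permuted transition might simulate a spurious state change. We would first try to choose $\alpha_q,\beta_q$ from a Sidon-like or ``well-separated'' set, for instance values of the form $2^{k}$ or $q$-dependent offsets forming a sequence where no difference coincides with a difference of another pair even after swapping coordinates. A parity or ordering invariant, such as the first coordinate always strictly dominating, would then rule out all non-identity permutations. If the state information does not fit, we would instead reduce from a more restricted \pspace-hard problem, such as a linear-bounded automaton encoded as a single exponentially bounded counter with a polynomial state space. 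That frees one coordinate to serve entirely as the state-and-guard register.
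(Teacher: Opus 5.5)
There is a genuine gap. Your soundness lemma rests on two properties: exact state guards, and the simulated counters never going negative. Neither is constructed, and the encoding $(B\alpha_q + x,\ B\beta_q + y,\ \ast)$ as written provides neither.

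\textbf{Nonnegativity.} A step into a state $q'$ with $\alpha_{q'} \geq 1$ remains fireable when its effect $a$ drives $x$ below zero, because the entry $B\alpha_{q'} + x + a$ is still nonnegative. The \vas then simulates integer semantics in that counter, and spurious runs appear. A low-order part is tested only while the block of its coordinate is $0$. So the block pattern must \emph{move}, so that each coordinate carries a zero block right after its low-order part may have decreased. Nothing in your plan does this.

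\textbf{Guards.} A vector with first entry $B(\alpha_{q'}-\alpha_q)+a$ fires from the encoding of every state $p$ with $\alpha_p > \alpha_q-\alpha_{q'}$, with analogous inequalities in the other entries. Enabledness is an inequality, so Sidon-type distinctness of differences is beside the point. A final residue or sum check cannot exclude spurious detours that later return to a legitimate encoding. The single-register fallback fails for the same reason. It can only impose thresholds $c_p \geq c_q - c_{q'}$, and making them exact already forces $2c_{q'} < c_q$ (take $p = q'$) along every transition $q \to q'$, which is impossible around a cycle of the control graph.

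The idea you are missing is the paper's rotation. State $p$ is encoded as $\mathbf{v}_p = (BA_p,\ Bp,\ 0)$ with $A_p = 2N(N-p) \geq N > p$. A transition $p \to q$ uses $\ell(\mathbf{v}_q) - \mathbf{v}_p = (Bq - BA_p,\ -Bp,\ BA_q)$. The blocks of the new shape sit in different coordinates from those of the old one, so this net difference removes (almost) the whole current encoding. From $\mathbf{v}_{p'}$ plus a small low-order part, the second entry forces $p' \geq p$ and the first forces $p' \leq p$. The scale gaps rule out every non-identity permutation: first block at least $NB$, second between $B$ and $NB$, third $0$, low-order parts below $B/2$.

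Each source transition is split into one effectful and two zero-effect steps, which brings the shape back after three rotations. Along the way the zero block visits every coordinate, which checks $\mathbf{u}+\mathbf{w} \geq \mathbf{0}$ componentwise. Soundness is then a local deadlock argument, not a reordering one.

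Finally, the scale separation needs \emph{every} configuration reachable from the source, not only those on accepting runs, to have norm below $B/2$. A fixed coordinate sum in the target does not stop low-order parts from spilling into the blocks. The paper secures this by first proving Lemma~\ref{thm:3sumpres} (hardness for \sumpres symmetric \parvass 3, from bounded \parvass 1 via the encoding $(m,m,2B-m)$), and only then eliminating states.
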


Combining Theorem \ref{thm:main} with the upper bound of \cite[Thm.~1.2]{chen2026},
we obtain \pspace-completeness of the reachability problem
for both \vas and symmetric \vas in dimensions $3$ and $4$.

\begin{corollary} \label{cor:main}
  The reachability problem is \pspace-complete
  for symmetric \parvas 3, \parvas 3, symmetric \parvas 4, and \parvas 4.
\end{corollary}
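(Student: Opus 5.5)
The corollary combines the lower bound of Theorem~\ref{thm:main} with the upper bound of \cite[Thm.~1.2]{chen2026}. The only step I do not regard as routine is the lower bound for symmetric \parvas 4.

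\textbf{Upper bounds.} By \cite[Thm.~1.2]{chen2026}, reachability in \parvas 4 is in \pspace. Symmetric \parvas 4 are a syntactic subclass, so they inherit this bound. For dimension $3$, take a \parvas 3 $V$ with source $\vr s$ and target $\vr t$. Pad every transition, $\vr s$ and $\vr t$ with a fourth coordinate equal to $0$. The fourth counter is then constant along every run, so the runs of the padded system from $(\vr s,0)$ correspond exactly to the runs of $V$ from $\vr s$. This gives a logspace reduction to \parvas 4, and symmetric \parvas 3 are again a special case of \parvas 3.

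\textbf{Lower bounds in dimension $3$ and for \parvas 4.} Theorem~\ref{thm:main} gives \pspace-hardness for symmetric \parvas 3. Hardness for \parvas 3 follows because symmetric \parvas 3 are a special case. The padding reduction above, applied in the opposite direction, maps symmetric \parvas 3 into \parvas 4, so \parvas 4 is \pspace-hard too.

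\textbf{Symmetric \parvas 4 (main obstacle).} Here plain padding fails: the padded transition set is not closed under permutations that move the zero coordinate. Closing it under all permutations of four coordinates adds transitions acting on the fourth counter, and these might create spurious runs.
\begin{itemize}
\item \emph{First approach.} Inspect the construction behind Theorem~\ref{thm:main} and check that it is dimension-robust. I would try to show that it produces, for every $d \ge 3$, a symmetric \parvas d whose configurations stay confined to the same "shape" of counter values. That would yield hardness of symmetric \parvas 4 directly.
\item \emph{Fallback.} Use the symmetric closure $V'$ of the padded system, with source $(\vr s,0)$ and target $(\vr t,0)$. I would then prove that any run of $V'$ can be normalised: whenever a counter is empty, permuting which coordinate plays the role of "empty" yields a run of the original symmetric \parvas 3 between the projected configurations. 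This should use the symmetry of $V$ itself and the invariance of reachability under coordinate permutations. If a counter is never empty, I would instead try an invariant, such as the coordinate sum modulo a suitable value, or a gadget forcing one counter to stay at $0$.
\end{itemize}
I expect establishing this normalisation, or the dimension-robustness of the construction, to be the only real work.
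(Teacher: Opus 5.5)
The upper bounds are correct and match the paper, and so are the lower bounds for symmetric \parvas 3, \parvas 3 and \parvas 4 (Theorem~\ref{thm:main}, \cite[Thm.~1.2]{chen2026}, padding). The gap is the lower bound for symmetric \parvas 4: you do not prove it, and your fallback cannot work. Take the symmetric \parvas 3 whose transitions are all permutations of $(-1,-1,2)$ and $(1,1,-2)$. From $(1,1,0)$ the only reachable configurations are $(1,1,0)$ and $(0,0,2)$, so $(1,0,1)$ is unreachable. Yet in the symmetric closure of the padded system you have $(1,1,0,0)\to(0,0,0,2)\to(1,0,1,0)$, via $(-1,-1,0,2)$ and $(1,0,1,-2)$. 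Every configuration on this run has an empty counter, so your normalisation claim is false. Renaming which coordinate is ``empty'' yields a run of the \parvas 3 only up to a permutation of the target. Source and target are not permutation-invariant, so that is a different instance. A symmetric invariant, or a gadget inside a symmetric transition set, cannot single out the fourth counter either. Symmetry can only be broken through the configurations, which is exactly what the construction of Theorem~\ref{thm:main} does.

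Your first approach is the paper's route (see the remark after the proof of Theorem~\ref{thm:main}), but as stated it misses an ingredient. That construction turns a \sumpres symmetric \parvass d into a symmetric \parvas d. The output is symmetric only because the input is symmetric in the same dimension: $\tau(\sigma(\vr v_{pq})+\vr w)$ must again be a transition, which needs $(p,\tau(\vr w),q)\in T$. So you cannot feed it the $3$-dimensional instance of Lemma~\ref{thm:3sumpres}. You first need \pspace-hardness of reachability for \sumpres symmetric \parvass 4; the paper takes this from \cite{KL25}, where the lemma holds for every $d\ge 2$, also for \sumpres instances. Then you need a $4$-dimensional state elimination, e.g.\ with $\vr v_p=(B A_p, Bp, 0, 0)$ and each transition split into four steps.

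Also, in dimension $4$ the ``shape'' is not confined as you suggest. The entries $0$ and $B A_q$ of $\vr v_{pq}$ can land on either of the two small coordinates, so the simulation must be argued from an arbitrary permutation of $\vr v_p$. It still goes through, for two reasons:
\begin{itemize}
\item the two negative entries of $\vr v_{pq}$ are still forced onto the very large and the large coordinate;
\item every coordinate is a small one after one of the first three steps of each block, so nonnegativity of $\vr u+\vr w$ is still checked in every coordinate.
\end{itemize}
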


The exact complexity of the reachability problem remains open
for \parvas 2 and for symmetric \parvas 2;
currently, it is only known to lie between \NP and \pspace.

\section{Preliminaries}

A $d$-dimensional \emph{vector addition system with states} ($d$-\vass)
is a pair $\V = (Q, T)$, where $Q$ is a finite set of states
and $T \subset Q \times \Z^d \times Q$ is a finite set of transitions.
The stateless fragment, 
where $\size{Q} = 1$, $\V$ is denoted as \parvas d.
Coordinates $1, \ldots, d$ are often called \emph{counters}.
A \emph{configuration} of $\V$ is a pair $(p, \vr v) \in Q \times \N^d$,
denoted as $p(\vr v)$.
For \vas, we omit the state and identify a configuration with an element of $\N^d$.
As \emph{norm} of a configuration $p(\vr v)$ we take the max-norm
of its underlying vector, i.e., $\norm{p(\vr v)} = \max\setof{\size{\vr v(i)}}{i \in \set{1, \ldots, d}}$.
For configurations $p(\vr v)$ and $q(\vr w)$ of $\V$, we have a step
$p(\vr v) \trans{} q(\vr w)$ if and only if $(p, \vr w - \vr v, q) \in T$.
Any sequence of configurations
$p_0(\vr v_0), p_1(\vr v_1), \ldots, p_n(\vr v_n)$
related by the step relation, namely satisfying
$p_{i-1}(\vr v_{i-1}) \trans{} p_i(\vr v_i)$ for $i = 1, \ldots, n$, we call
a \emph{run} from $p_0(\vr v_0)$ to $p_n(\vr v_n)$.
If such a run exists, we also say that $p_n(\vr v_n)$ is \emph{reachable} from $p_0(\vr v_0)$.

One of the most fundamental computational problems for \vass is
the \emph{reachability problem}:
\begin{quote}
given a \parvass d $\V$ together with two configurations, a source $s$ and a target $t$,
determine whether $\V$ has a run from $s$ to $t$.
\end{quote}

Symmetric \vass are a syntactic fragment of \vass where
the set of transitions is invariant under permutations of coordinates \cite{KL25}.
Let $S_d$ denote the symmetric group containing all permutations of $\set{1, \ldots, d}$.
The group $S_d$ acts on vectors $\vr w\in\Z^d$ by permuting coordinates:
a permutation $\sigma\in\symg d$ maps $\vr w$ to the vector $\sigma(\vr w)$ defined by
$\sigma(\vr w)(i) = \vr w(\sigma^{-1}(i))$, or equivalently $\sigma(\vr w)(\sigma(i)) = \vr w(i)$,
for $i = 1, \ldots, d$.
A \vass $\V = (Q, T)$ is symmetric
if 
$$T = \sigma(T) = \setof{(p, \sigma(\vr w), q)}{(p, \vr w, q) \in T}$$ 
for every $\sigma \in S_d$.
  Figure \ref{fig:example} shows a symmetric \parvass 2 with
  states $Q = \set{p, q}$ and the following transitions:
  \begin{align*}
    &(q, \ (-1, -1), \ p), \\
    &(p, \ (0,2),\ q), && (p, \ (2,0), \ q), \\
    &(p, \ (-1, 3), \ p), && (p, \ (3, -1), \ p), \\
    &(q, \ (1, -1), \ q), && (q, \ (-1, 1), \ q).
  \end{align*}
%
  The configuration $p(2,2)$ is reachable from $p(1,1)$
  via the run
  $$p(1,1) \trans{} p(0,4) \trans{} q(2,4) \trans{} q(3,3) \trans{} p(2,2)$$
  while
  $p(2,1)$ is not, since every transition preserves the parity of
  the sum of the counter values.

\begin{figure}
\begin{center}
\scalebox{0.7}
{\begin{tikzpicture} [draw=black!70,
  fill=blue!20,
  node distance = 4cm, 
  on grid, 
  auto,
  every initial by arrow/.style = {thin}]
\node (q0) [state,
scale=1.4,
fill=blue!20,
  initial text = {}] {$p$};
\node (q1) [state,
scale=1.4,
fill=blue!20,
  right = of q0] {$q$};
\path [-stealth, thick]
   (q0) edge[bend left] node[scale=1.5] {$\substack{(0,2) \\ (2, 0)}$}  (q1)
   (q1) edge[bend left] node[scale=1.1] {$(-1,-1)$}  (q0)
   (q0) edge [loop above] node[scale=1.5] {$\substack{(-1, 3) \\ (3, -1)}$}()
   (q1) edge [loop above] node[scale=1.5] {$\substack{(1, -1) \\ (-1, 1)}$}();
\end{tikzpicture}}
\end{center}
\caption{A symmetric \parvass 2.}
\label{fig:example}
\end{figure}

\begin{remark}
It is natural to assume that transition set $T$ of a symmetric \parvass 
d is represented succinctly, by 
listing one representative per every orbit of the action of $S_d$.
Therefore,
the reachability problem in symmetric \vass does not immediately reduce polynomially to the one in
general \vass due to exponential blow-up necessary to explicitly list all transitions.
However,
when dimension is fixed, as in this paper, the reduction is polynomial.
\end{remark}

A \parvass d $\V$ together with a configuration $s$
is called \emph{\sumpres} if norms of all configurations $s'$
reachable from $s$ satisfy $\norm{s'} \leq \norm{s}$.
In such case we also apply the term \emph{\sumpres} to the whole reachability instance
$(\V, s, t)$.

\section{The hardness proof}

As an intermediate step in proving Theorem \ref{thm:main},
we use the \pspace lower bound of \cite{KL25} for the reachability problem in symmetric \parvass 3.
For the sake of completeness, we recall the proof
in the form needed below, namely restricting to \sumpres instances.

\begin{lemma} \label{thm:3sumpres}
  The reachability problem for \sumpres symmetric \parvass 3 is \pspace-hard.
\end{lemma}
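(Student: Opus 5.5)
We reduce from the PSPACE-hard reachability problem for \sumpres 2-VASS: the Blondin et al.\ hardness proof \cite{Blondin15} goes through bounded one-counter automata and simulates a polynomial-space computation with counters bounded by an exponential bound $B$ given in binary. We may therefore assume a 2-VASS $\V=(Q,T)$ together with $s=p(\vr v)$ and $t$ in which every reachable configuration satisfies $\norm{\cdot}\le B$. If necessary, we enforce this by adding two complementary counters.

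The plan is the following standard trick, which is presumably what \cite{KL25} does. Keep the total sum of the counters constant. Concretely, we work in dimension 3 with the invariant $x+y+z=N$, where $N$ is polynomial-size in binary. This invariant makes the instance \sumpres: every component is at most $N$, and the source vector can be chosen with norm exactly $N$ (for instance by adding a padding state from which an initial vector $(N,0,0)$ is distributed). The main task is to encode a bounded 2-counter (or bounded one-counter with state) computation in a symmetric 3-VASS whose transitions all have coordinate sum zero. Zero-sum transitions automatically keep the norm bounded, so the bound comes for free.

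The symmetry is the difficulty. A symmetric transition set cannot distinguish coordinates, so a state must not rely on knowing which counter is "first". My approach is to use magnitude-based identification. Choose $N$ and scales so that in every reachable configuration the three counters lie in disjoint intervals, say $z$ huge (the reservoir), $x$ in a middle range offset by a large constant $M$, and $y$ small. Transitions are symmetric, so a permuted transition $\sigma(\vr w)$ could act on the wrong coordinates. Intervals with large gaps prevent any permuted move from yielding a valid continuation that still reaches the target: every transition vector has norm much smaller than the gaps, so a wrong permutation either drives a coordinate negative immediately or produces a configuration whose pattern of interval membership is inconsistent. Such a configuration can never return to the target pattern, and we argue this with invariants modulo suitable constants. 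Alternatively, and more robustly, we could encode states into residues: replace each state $q$ by a distinct residue class of $x$ modulo a large $K$ and carry counter values in multiples of $K$. Then a state check becomes a symmetric zero-sum transition that works on whichever coordinate has the right residue pattern. Since residues are permuted consistently together with the coordinates, a permuted transition only simulates the same move on a relabelled configuration.

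The key steps, in order, are as follows. First, fix the PSPACE-hard bounded 2-VASS instance. Second, design the zero-sum symmetric 3-VASS gadgets for increment, decrement, and the states. Third, prove completeness: a run of the original instance yields a run of the new one. Fourth, prove soundness up to symmetry. Every run of the new instance, after applying a permutation on each step, projects to a run of the original. For this we show that the set of valid simulated configurations is closed under $S_3$ and that the gadgets are equivariant, so the target being reachable up to a permutation suffices; we pick a target with distinct coordinates. The main obstacle is this soundness step, which rules out spurious interleavings in which permuted transitions mix coordinates. I would handle it with an invariant stating that, at all times, the multiset of residues and intervals of the coordinates determines a unique labelling.
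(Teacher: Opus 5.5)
\textbf{There is a genuine gap.} The gadgets are the whole content of the lemma, and you never give them. The two mechanisms you propose for soundness also fail.

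\emph{Magnitude intervals.} Separating counters into intervals only helps if the transitions have entries comparable to the gaps. With transition vectors of small norm, wrongly permuted transitions fire and keep every coordinate inside its interval. Take a 2-VASS transition $p\to q$ with effect $(1,-1)$, encoded as the zero-sum vector $(1,-1,0)$. From an encoded configuration $(M+a,b,z)$ with $b\ge 1$, the permuted vector $(0,-1,1)$ fires. It yields $(M+a,b-1,z+1)$, which is a well-formed encoding of $(a,b-1)$ in state $q$. So it simulates a move $(0,-1)$ that the 2-VASS does not have, and $(1,0,-1)$ and $(-1,0,1)$ behave the same way. Your invariant that ``the labelling is unique'' survives such a step: the wrong effect has simply been applied to correctly labelled coordinates.

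\emph{Equivariance.} This holds for every symmetric system, and it only relates runs from $c$ to runs from $\sigma(c)$. The danger is firing $\sigma(\vr w)$ with a wrong $\sigma$ from the \emph{same} configuration $c$. In general $c+\sigma(\vr w)$ is not a permutation of any legitimate successor of $c$.

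\emph{Residues.} This idea does not help either. It is unnecessary here, since the lemma concerns VASS and states are available. Moreover, permuted vectors whose entries are multiples of $K$ preserve all residues. Finally, a VASS cannot inspect residues in the middle of a run.

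\textbf{What is missing} is a way to make every unintended permutation fail \emph{immediately}, by driving some counter below zero.

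The paper reduces from bounded one-counter reachability. Having a single counter is what leaves room for redundancy in three dimensions. It encodes $q(m)$ as $q(m,m,2B-m)$.

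An increment by $n$ passes through an intermediate state: first $(B+n,B+n,-B-n)$, then $(-B,-B,B)$. A decrement is handled analogously. In each step, the amount subtracted ($B+n$ or $B$) is larger than every coordinate other than the intended ones, so every other permutation goes negative. The two coordinates that are not separated by magnitude carry equal values, so the permutations exchanging them produce the same vector.

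The first step of an increment also enforces $m+n\le B$ for free, since it needs $2B-m\ge B+n$. Source-boundedness follows because all values stay at most $2B$, so no zero-sum invariant is needed.
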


\begin{proof}
  We provide a reduction from the \pspace-complete reachability problem
  for \emph{bounded} \parvass 1 \cite{Fearnley_2015}. 
  An instance of this problem consists of
  a \parvass 1 $\V = (Q, T)$,
  configurations $s, t$, and a bound $B \in \N$.
  The problem asks whether $\V$ has a run from
  $s$ to $t$ in which the counter value is always at most $B$.
  Given $\V$, $s$, $t$, and $B$, we construct a symmetric \parvass 3
  $\V' = (Q', T')$ and
  two configurations $s'$ and $t'$ such that $\V$ has a $B$-bounded run from $s$ to $t$
  if and only if $\V'$ has a run from $s'$ to $t'$.
  Moreover, the instance $(\V', s', t')$ will be \sumpres.
  \Wlog we assume
  that the counter value in $s$ is $0$.

  The idea of the reduction is to use two counters of $\V'$ to store the counter value of $\V$.
  More precisely, a configuration $c = q(m)$ reachable from $s$ in $\V$ will be
  simulated by the configuration $\overline c = q(m, m, 2B-m)$ of $\V'$.
  Consequently, we put $s' = \overline s$ and $t' = \overline t$, and note
  that $\norm {\overline s} = 2B$.
  We let $Q' = Q \cup Q^+ \cup Q^-$, 
  where $Q^+ = \set{q^+ : q \in Q}$ and
  $Q^- = \set{q^- : q \in Q}$ are disjoint copies of $Q$.
  In the definition of the transitions of $\V'$
  we assume, \mywlog,
  that the effect of every transition of $\V$ is even,
  while the bound $B$ is odd.
  This guarantees that the counter of $\V$ never reaches exactly the value $B$ along any $B$-bounded
  run from $s$.
  We also assume, \mywlog, that all transitions of $\V$ have nonzero effect.
  For every transition $(p, n, q) \in T$, where $n > 0$, we add to $T'$ 
  six transitions, namely
  \[
    (p, (B+n, B+n, -B-n), q^+), \qquad \qquad 
    (q^+, (-B, -B, B), q),
  \]
  and their symmetric counterparts:
  \begin{align*}
    & (p, (B+n, -B-n, B+n), q^+), &&
    (q^+, (-B, B, -B), q), \\
    & (p, (-B-n, B+n, B+n), q^+), &&
    (q^+, (B, -B, -B), q).
  \end{align*}
  We observe that a step $p(m) \trans{} q(m+n)$ in $\V$ that
  does not exceed the bound $B$ can be simulated from
  the configuration $\overline{p(m)} = p(m, m, 2B-m)$ of $\V'$
  using the first two transitions above, namely
  \begin{align*}
    p(m, m, 2B-m) &\trans{} q^+(B+m+n, B+m+n, B-m-n) \\
    &\trans{} q(m+n, m+n, 2B-m-n) = \overline{q(m+n)}.
  \end{align*}
  On the other hand, the symmetric counterparts of these transitions
  cannot be used: replacing the first
  transition by any of its
  symmetric counterparts would decrease the first or second counter below zero
  (since, by our initial assumption, $m<B$), while
  replacing the second transition by any of its symmetric counterparts would decrease
  the third counter below zero (relying on the strict inequality $m+n<B$).
  Furthermore, the two-transition sequence in $\V'$ shown above cannot cause the simulated
  counter of $\V$ to exceed the bound $B$:
  the first transition cannot be executed if $m+n > B$,
  since it would decrease the last counter below zero. 
  
  Analogously, for every transition $(p, -n, q) \in T$, where $n > 0$, 
  we add to $T'$ six transitions, namely
  \[
    (p, (B, B, -B), p^-), \qquad \qquad 
    (p^-, (-B-n, -B-n, B+n), q), 
  \]
  and their symmetric counterparts:
  \begin{align*}
    & (p, (-B, B, B), p^-), && 
    (p^-, (B+n, -B-n, -B-n), q), \\
    & (p, (B, -B, B), p^-), && 
    (p^-, (-B-n, B+n, -B-n), q).
  \end{align*}
  Again, we observe that a step $p(m) \trans{} q(m-n)$ in $\V$
  can be simulated from
  the configuration $\overline{p(m)} = p(m, m, 2B-m)$ of $\V'$
  using the first two transitions above, namely
  \begin{align*}
    p(m, m, 2B-m) &\trans{} p^-(B+m, B+m, B-m) \\
    &\trans{} q(m-n, m-n, 2B-m+n) = \overline{q(m-n)}.
  \end{align*}
  Moreover, the symmetric counterparts of these transitions cannot be used, for the same reason as above:
  replacing the first transition by any of its
  symmetric counterparts would decrease the first or second counter below zero, and
  replacing the second transition by any of its symmetric counterparts would decrease
  the last counter below zero.
  For the same reason, the two-transition sequence in $\V'$ shown above cannot
  make the simulated counter negative, i.e., it cannot yield
  $m-n<0$. 
 
  
  The above observations establish the correctness of the reduction: there is a $B$-bounded run from
  $s$ to $t$ in $\V$ if and only if there is a run from
  $\overline s$ to $\overline t$ in $\V'$.
  Moreover, the instance $(\V', s', t')$ is \emph{\sumpres},
  as every configuration $c'$ reachable in $\V'$ from $s'$ has the norm at most $2B=\norm{s'}$.
  \qed
\end{proof}


Having Lemma \ref{thm:3sumpres}, we are prepared to prove Theorem \ref{thm:main}.

\begin{proof}[of Theorem \ref{thm:main}]
  We provide a reduction from the reachability problem in \sumpres symmetric
  \parvass 3.
  The overall idea of the proof is inspired by the classical control-state 
  elimination construction of \cite{HopcroftPansiot}.

  Consider a \sumpres symmetric \parvass 3 $\V = (Q, T)$ together with 
  source and target configurations 
  $s, t$.
  Let $B := 2\norm s + 1$ to guarantee that
  the norm of each configuration $s'$
  reachable from $s$ satisfies $\norm{s'} < B / 2$.
  We can also assume \mywlog 
  that transitions change counter values by numbers smaller than $B / 2$,
  i.e., for every $(p, \vr w, q) \in T$ we have $\norm{\vr w} < B / 2$.
  
  As a preparatory step, we transform $\V$ to ensure
  that the length of every run from $s$ to $t$
  in $\V$ is a multiple of $3$, and that the only transitions with nonzero effect are
  these at positions congruent to $1$ modulo $3$ in such runs. 
  This can be achieved by adding two dummy copies of control states
  $Q_1 = \setof{q_1}{q \in Q}$ and $Q_2 = \setof{q_2}{q \in Q}$, and  
  replacing every transition $(p, \vr w, q)$ of $\V$ with the following three 
  transitions:
  \begin{align} \label{eq:3trans}
    (p, \vr w, q_1), \qquad \qquad (q_1, \vr 0, q_2), \qquad \qquad (q_2, \vr 0, q),
  \end{align}
  thus obtaining the new set of transitions $T$.
  The transformation preserves the symmetry and \sumpres{}ness of $\V$.
  
  Furthermore we assume, \mywlog, that $Q\cup Q_1\cup Q_2 = \set{1, 2, \ldots, N-1}$, 
  for some $N\in\N$.
  
  \medskip

  Having made these assumptions on $\V = (Q\cup Q_1\cup Q_2,T)$,
  we define a symmetric \parvas 3 $\V'$ with the transition set $T'$ that simulates $\V$, together with source
  and target configurations $s'$ and $t'$.
  For every control state $p \in Q \cup Q_1 \cup Q_2$, let $A_p := 2N(N-p)$, 
  and define the vector
  \[
    \vr v_{p} = (B\cdot A_p, \ B \cdot p, \ 0) \in \Z^3,
  \]
  to be used in the simulation of $\V$. The idea is to simulate a configuration
  $p(\vr w)$ of $\V$ by the configuration
  $\overline{q(\vr w)} = \vr v_p + \vr w \in \N^3$ of $\V'$
  (note that the notation $\overline{q(\vr w)}$ has now a different 
  meaning than in the proof of Lemma \eqref{thm:3sumpres}).
  Consequently, we put $s' := \overline s$ and $t' := \overline t$.
  We note three important intuitive ideas underlying the proof:
  \begin{enumerate}
    \item[(i)] The first coordinate of $\vr v_p$ is `very large', the second one is `large', 
    and the last one 
    is 0. This difference of orders of magnitude will be crucial for distinguishing the
    coordinates, what guarantees correctness of the simulation.
    
    \smallskip
    \item[(ii)] With increasing $p$, the second coordinate of $\vr v_p$ increases while the first one 
    decreases. This will ensure the right choice of $p$ in the simulation.

    \smallskip
    \item[(iii)] Since $(\V, s)$ is \sumpres, all entries of $\vr w$
  are smaller than $B$, whereas all entries of
  $\vr v_{p}$ are multiples of $B$.
  This separation of scales will allow us, intuitively speaking,
  to distinguish $\vr w$ from $\vr v_{p}$ in the simulation of $\V$ by $\V'$.
  \end{enumerate}
  When defining $T'$ we will use
  the two left rotations of $\vr v_p$:
  \[
    \lrot{\vr v_p} \ := \ (B \cdot p, \ 0, \ B\cdot A_p), \qquad 
    \lrot{\lrot{\vr v_p}} \ := \ (0, \ B\cdot A_p, \ B \cdot p).
  \]
  For any two distinct control states $p,q\in Q\cup Q_1 \cup Q_2$ we define the vector
  \begin{align} \label{eq:rotation}
    \vr v_{pq} \ := \ \lrot{\vr v_q} - \vr v_p \ = \ 
    (B \cdot q - B\cdot A_p, \ - B \cdot p, \ B \cdot A_q) \ \in \ \Z^3,
  \end{align}
  together with its two left-rotated versions:
  \begin{align} \label{eq:rotations}
    \lrot{\vr v_{pq}} \ := \ \lrot{\lrot{\vr v_q}} - \lrot{\vr v_p}, \qquad 
    \lrot{\lrot{\vr v_{pq}}} \ := \ \vr v_q - \lrot{\lrot{\vr v_p}}.
  \end{align}
It will be important later that the first coordinate of $\vr v_{pq}$ is at most
$-N\cdot B$, and its second coordinate is larger than $-N\cdot B$ but at most $-B$:
\begin{align} \label{eq:ineq}
\vr v_{pq}(1) \ \leq \ -N\cdot B \ < \ \vr v_{pq}(2) \ \leq \ -B.
\end{align}

\smallskip

  We define a symmetric \parvas 3 $\V'$ as follows: for every transition
  $(p, \vr w, q) \in T$, where $p,q\in Q\cup Q_1\cup Q_2$,
  we add to $T'$ the six transitions
  \begin{align} \label{eq:trans}
    \setof{\sigma(\vr v_{pq})+\vr w}{\sigma \in \symg 3}.
  \end{align}
  As $\V$ is symmetric, $\V'$ is symmetric as well.
  Note that $\lrot{\vr v_{pq}}$ and $\lrot{\lrot{\vr v_{pq}}}$ belong to
  $\setof{\sigma(\vr v_{pq})}{\sigma \in \symg 3}$.
  The correctness proof of the reduction relies on the observation that
  only certain transitions from \eqref{eq:trans} are usable by $\V'$ when it starts
  from the source $\overline s$, namely these listed in \eqref{eq:3trans_sim} below.
  Thus, the symmetry of $\V'$ is not fully exploitable.

  \medskip

  We need to argue that there is a run of $\V$ from $s$ to $t$
  if and only if there is a run of $\V'$ from $\overline s$ to $\overline t$.

  \medskip

  For the `only-if' implication, we observe that the three consecutive steps of $\V$
  induced by firing the three transitions from \eqref{eq:3trans}, namely
  \begin{align} \label{eq:3trans_V}
    p(\vr u) \trans{} q_1(\vr u + \vr w) \trans{} q_2(\vr u + \vr w) \trans{}
    q(\vr u + \vr w),
  \end{align}
  are simulated from the configuration $\overline{p(\vr u)} = \vr v_p + \vr u$ of $\V'$ 
  by three consecutive steps using the three corresponding transitions in $T'$:
  \begin{align} \label{eq:3trans_sim}
    \vr v_{pq_1} + \vr w, \qquad 
    \lrot{\vr v_{q_1 q_2}} + \vr 0, \qquad 
    \lrot{\lrot{\vr v_{q_2 q}}} + \vr 0.
  \end{align}
  The three steps of $\V'$ reach $\vr v_q + \vr u + \vr w = \overline{q(\vr u + \vr w)}$,
  by the equalities \eqref{eq:rotation} and \eqref{eq:rotations}.
  Therefore, by a straightforward induction on the length of the run,
  every run of $\V$ from $s$ to $t$ is simulated by a run of $\V'$ from
  $\overline s$ to $\overline t$.
  
  \medskip

  For the converse implication, consider a configuration $p(\vr u)$ reachable
  from $s$ in $\V$, and the corresponding configuration
  $\overline{p(\vr u)} = \vr v_p + \vr u$ of $\V'$.
  We will argue that the three steps of $\V'$ as shown in \eqref{eq:3trans_sim},
  corresponding to some three steps of $\V$ as in \eqref{eq:3trans_V},
  are the only possible 
  steps from $\overline{p(\vr u)}$, and 
  if any of these steps is not fireable, $\V'$ reaches a deadlock configuration.
  As the first step, we 
  use the following facts: 
  \begin{align} \label{eq:ass}
    \vr v_p = (B\cdot A_p, \ B \cdot p, \ 0), \qquad 
    A_p\geq N, \qquad
    p < N, \qquad \norm{\vr u} < B/2
  \end{align}
  (the latter inequality holds since $(\V, s)$ is \sumpres),
  to deduce:
  \begin{claim}
  The only possible step of $\V'$ from $\vr v_p + \vr u$ is to use a transition
  of the form $\vr v_{pq_1} + \vr w$, for some transition $(p, \vr w, q_1) \in T$.
  \end{claim}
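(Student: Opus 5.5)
The plan is a case analysis on which permutation and which pair of states could possibly be used. Fix a transition of $T'$ that fires from $\vr v_p + \vr u = (B\cdot A_p + \vr u(1),\ B\cdot p + \vr u(2),\ \vr u(3))$. By \eqref{eq:trans} it has the form $\sigma(\vr v_{ab}) + \vr w$ for some $(a, \vr w, b) \in T$ and some $\sigma \in \symg 3$. I must show that $\sigma$ is the identity, that $a = p$, and that $b$ lies in $Q_1$.

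The entries of $\vr v_{ab}$ fall into three scales, by \eqref{eq:ineq} and $A_b \geq N$:
\begin{itemize}
\item one entry is at most $-N\cdot B$;
\item one entry lies in $(-N\cdot B, -B]$;
\item one entry equals $B\cdot A_b \geq N\cdot B > 0$.
\end{itemize}
Throughout I use $\norm{\vr u} < B/2$ and $\norm{\vr w} < B/2$, which give $|\vr u(i) + \vr w(i)| < B$. The facts collected in \eqref{eq:ass} are the only other input.

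First I locate the positive entry. Suppose the third coordinate received either negative entry. Its new value would be at most $\vr u(3) - B + \vr w(3) < 0$, which is impossible. Hence $\sigma$ places the positive entry $B\cdot A_b$ in coordinate $3$, and the two negative entries go to coordinates $1$ and $2$.

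Next I show that coordinate $2$ cannot receive the entry that is at most $-N\cdot B$. If it did, its new value would be at most $B\cdot p - N\cdot B + (\vr u(2)+\vr w(2)) < B(p - N + 1) \leq 0$, since $p \leq N-1$. So coordinate $2$ receives $-B\cdot a$ and coordinate $1$ receives $B\cdot b - B\cdot A_a$. Thus $\sigma$ is the identity and the transition is $\vr v_{ab} + \vr w$.

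It remains to pin down $a$, using nonnegativity of the first two coordinates after the step.
\begin{itemize}
\item From coordinate $2$ we get $B(p-a) + (\vr u(2)+\vr w(2)) \geq 0$. Since $|\vr u(2)+\vr w(2)| < B$, this gives $p - a > -1$, hence $a \leq p$.
\item From coordinate $1$ we get $B(A_p + b - A_a) + (\vr u(1)+\vr w(1)) \geq 0$. This gives $A_a \leq A_p + b$, i.e.\ $2N(p-a) \leq b < N$. Since $p - a$ is an integer, $a \geq p$.
\end{itemize}
Therefore $a = p$ and the transition is $\vr v_{pb} + \vr w$ with $(p, \vr w, b) \in T$. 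Because $p \in Q$, the preparatory transformation \eqref{eq:3trans} guarantees that every transition of $T$ leaving $p$ goes to a state of $Q_1$. So $b = q_1$ for some $q$, as claimed.

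The main obstacle is making sure the case analysis over $\sigma$ is exhaustive while keeping the slack terms $\vr u, \vr w$ strictly below one unit of $B$. The separation of scales (iii), together with $p < N$ and $b < N \leq 2N$, is exactly what makes each of the inequalities above strict.
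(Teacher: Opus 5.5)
Your proof is correct and uses the same two ideas as the paper. The separation of scales (point (i), together with $\norm{\vr u+\vr w}<B$) forces $\sigma$ to be the identity, and the opposite monotonicity in $p$ of the first two coordinates of $\vr v_p$ (point (ii)) forces the source state to be $p$. The only difference is organisational: you fix $\sigma$ uniformly for every source state $a$ and then determine $a$ from coordinates $1$ and $2$, whereas the paper splits on whether the source state equals $p$; your version also writes out the inequalities the paper only sketches.
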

  \paragraph{Proof of the claim.}
  First, using any other transition of the form $\sigma(\vr v_{pq_1}) + \vr w$, with
  $\sigma$ different from the identity,
  would decrease the second or third counter below zero, due to
  the inequalities \eqref{eq:ineq}.
  We rely here on the intuitive point (i) above. Indeed, 
  by the inequalities in \eqref{eq:ass} we observe that  
  the first coordinate is the only `very large' one, namely at least as large as 
  $B\cdot N$, and the second coordinate is not `very large'
  but it is still `large', namely at least as large as $B$.
  We rely here also on the point (iii), indeed, the third coordinate
  is smaller than $B/2$
  (cf.~the inequalities \eqref{eq:ineq}).
  Second, using any other transition of the form $\sigma(\vr v_{qq_1}) + \vr w$ where
  $q\neq p$,
  would decrease the first or the second counter below zero
  if $\sigma$ is the identity (we rely here on the point (ii) above);
  and would decrease the second or the third counter below zero
  if $\sigma$ is different from the identity -- here we argue the same way as above,
  again, we rely here on the point (i).
  The claim is thus proved. \qed

  \medskip

  Firing a transition of the form $\vr v_{pq_1} + \vr w$ leads to the configuration
  $\lrot{\vr v_{q_1}} + \vr u + \vr w$ of $\V'$.
  However, it could happen that
  $\vr u + \vr w$ has negative components that are compensated by
  some `very large' or `large' positive component of $\lrot{\vr v_{q_1}}$,
  so that the step of $\V'$ does not correspond to a step of $\V$.
  %
  %
  To rule out this possibility, we argue as follows.
  Using the following facts:
  \[
    \lrot{\vr v_{q_1}} = (B \cdot {q_1}, \ 0, \ B\cdot A_{q_1}), \qquad 
    A_{q_1}\geq N, \qquad
    q_1 < N, \qquad \norm{\vr u + \vr w} < B
  \]
  (some components of $\vr u + \vr w$ might exceed $B/2$ when another component
  is negative, but all components are surely smaller than $B$),
  we deduce that the second coordinate of $\vr u + \vr w$ cannot be negative
  (since $\lrot{\vr v_{q_1}}$ has zero in this coordinate).
  Furthermore, by the same reasoning as in the claim above, we deduce that
  the only possible step of $\V'$ from $\lrot{\vr v_{q_1}} + \vr u + \vr w$ is
  to use a transition of the form $\lrot{\vr v_{q_1 q_2}} + \vr 0$.
  If fireable, this transition
  leads to the configuration $\lrot{\lrot{\vr v_{q_2}}} + \vr u + \vr w$ of $\V'$.
  Then once again, using the following facts:
  \[
    \lrot{\lrot{\vr v_{q''}}} = (0, \ B\cdot A_{q_2}, \ B \cdot q_2), \qquad 
    A_{q_2}\geq N, \qquad
    q_2 < N, \qquad \norm{\vr u + \vr w} < B,
  \]
  we deduce that the first coordinate of $\vr u + \vr w$ cannot be negative
  (since $\lrot{\lrot{\vr v_{q_2}}}$ has zero in this coordinate),
  and moreover
  the only possible step of $\V'$ from $\lrot{\lrot{\vr v_{q_2}}} + \vr u + \vr w$ is
  to use a transition of the form $\lrot{\lrot{\vr v_{q_2 q}}} + \vr 0$,
  as in the claim above.
  If fireable, this transition
  leads to the configuration
  $\vr v_q + \vr u + \vr w = \overline{q(\vr u + \vr w)}$ of $\V'$,
  which implies, again by the same reasoning,
  that the third coordinate of $\vr u + \vr w$ cannot be negative.
  Summing up, firing the three transitions checks nonnegativeness of 
  $\vr u + \vr w$ on all coordinates.

  \smallskip

  We conclude that the three steps of $\V'$ shown above are the only possible steps from
  $\overline{p(\vr u)}$, and if they are fireable, they lead to $\overline{q(\vr u + \vr w)}$.
  On the other hand, if any of these steps is not fireable, $\V'$ reaches a deadlock configuration,
  and consequently the run cannot reach the target $\overline t$.
  Therefore, using a straightforward induction on the length of a run we prove that
  every run of $\V'$ from $\overline s$ to $\overline t$ corresponds
  to (implies) a run of $\V$ from $s$ to $t$.
  This completes the proof of Theorem \ref{thm:main}.
  \qed
\end{proof}


\begin{remark}
  While the reachability problem in
  \parvas 3 straightforwardly reduces to
  the reachability problem in \parvas d for any $d>3$, the reduction
  does not work for symmetric \vas.
  Nevertheless, 
  Lemma \ref{thm:3sumpres} holds for any dimension $d \geq 2$ \cite{KL25}, even under
  restriction to   \sumpres instances, and in consequence
  the proof of Theorem \ref{thm:main} can be easily adapted to any dimension $d \geq 3$.
\end{remark}

\section{Final remarks}

We have shown that the reachability problem
for symmetric \parvas 3 is \pspace-hard.
As a consequence, reachability for general
\parvas 3 is also \pspace-hard.
Combining our result with upper bounds
from \cite{chen2026,KL25}, we obtain Corollary \ref{cor:main}:
\pspace-completeness of the reachability problem
for \vas and symmetric \vas in dimensions $3$ and $4$.

One can observe that the reduction
used in the proof of Theorem \ref{thm:main}
does not become substantially harder due to the assumed symmetry of \parvas 3.
This suggests that symmetric \vass may serve
as a useful intermediate model for studying the complexity of decision problems 
in low-dimensional \vas.

The techniques developed in this paper do not seem
to extend to two-dimensional \vas. 
At present, the complexity of the reachability problem for this model 
remains open, sandwiched between \NP and \pspace.
Indeed, we have the following line of reductions between the models:%
\footnote{The second reduction has been recently noticed by Karol Węgrzycki and Anubhav Dhar.}
\newcommand{\reducesto}{\ \ \rightarrow\ \ }
\[
\text{\parvas 1} \reducesto
\text{bounded \parvas 1} \reducesto
\text{symmetric \parvas 2} \reducesto
\text{\parvas 2} \reducesto
\text{\parvass 2},
\]
with the first one being \NP-complete, the last one being \pspace-complete,
and the complexity of all the intermediate cases unknown.
(As in the proof of Lemma \ref{thm:3sumpres}, the bounded version of the reachability problem asks for a run that never
exceeds the value given as part of the input, encoded in binary.)

\begin{question}
  What is the complexity of the reachability problem in 
  bounded \parvas 1, symmetric \parvas 2,
  and general \parvas 2?
\end{question}

Notably, all the complexities are known for the analogous line of  stateful models,
namely the first one is \NP-complete and all others are \pspace-complete:
\[
\text{\parvass 1} \reducesto
\text{bounded \parvass 1} \reducesto
\text{symmetric \parvass 2} \reducesto
\text{\parvass 2}.
\]

\begin{credits}
\subsubsection*{\ackname}
We are grateful to Henry Sinclair-Banks for initiation of investigation of
low-dimensional VAS, and for many valuable discussions.
\end{credits}

\bibliography{bib,pn-bib}

\appendix

\end{document}